\begin{document}
\title{Generalization Bounds for Set-to-Set Matching with Negative Sampling}
%
%
\author{Masanari Kimura\inst{1}\orcidID{0000-0002-9953-3469}}
\authorrunning{M. Kimura}
%
\institute{ZOZO Research, Tokyo, Japan\\ \email{masanari.kimura@zozo.com}}
\maketitle              
\begin{abstract}
The problem of matching two sets of multiple elements, namely set-to-set matching, has received a great deal of attention in recent years.
In particular, it has been reported that good experimental results can be obtained by preparing a neural network as a matching function, especially in complex cases where, for example, each element of the set is an image.
However, theoretical analysis of set-to-set matching with such black-box functions is lacking.
This paper aims to perform a generalization error analysis in set-to-set matching to reveal the behavior of the model in that task.

\keywords{Set matching \and Generalization bound \and Neural networks}
\end{abstract}
\section{Introduction}
The problem of matching two sets of multiple elements, namely set-to-set matching, has received a great deal of attention in recent years~\cite{iwata2015unsupervised,kimura2021shift15m,lisanti2017group,xiao2018group}.
The problem is formalized as a task that, given two distinct sets, finds the goodness of match between them.
In particular, when the elements of the set are high-dimensional, neural networks are used as the matching function~\cite{saito2020exchangeable}.
Although these strategies have been reported to work well experimentally, there is a lack of research on their theoretical behavior.
A mathematical understanding of the behavior of the algorithm is an important issue since a lack of theoretical research hinders the improvement of existing algorithms for set-to-set matching.

We aim to perform a generalization error analysis of set-to-set matching algorithms in the context of statistical learning theory~\cite{vapnik1999overview,vapnik1999nature}.
In particular, existing deep learning-based set-to-set matching algorithms rely on negative sampling, a procedure in which negative examples are randomly generated while learning process~\cite{saito2020exchangeable}.
Therefore, we clarify the theoretical behavior of the set-to-set matching algorithm with negative sampling.

\section{Preliminaries}
Let $\bm{x}_n, \bm{y}_m \in \mathfrak{X} = \mathbb{R}^d$ be $d$-dimensional feature
vectors representing the features of each individual item.
Let $\mathcal{X} = \{\bm{x}_1,\dots,\bm{x}_N\}$ and $\mathcal{Y} = \{\bm{y}_1,\dots,\bm{y}_M\}$ be sets of these feature vectors, where $\mathcal{X},\mathcal{Y} \in 2^{\mathfrak{X}}$ and $N,M\in\mathbb{N}$ are sizes of the sets.
The function $f: 2^{\mathfrak{X}}\times 2^{\mathfrak{X}} \to \mathbb{R}$ calculates a matching score between the two sets $\mathcal{X}$ and $\mathcal{Y}$.
Guaranteeing the exchangeability of the set-to-set matching requires that the matching function $f(\mathcal{X}, \mathcal{Y})$ is symmetric and invariant under any permutation of items within each set as follows.
\begin{definition}[Permutation Invariance]
A set-input function f is said to be permutation invariant if
\begin{align}
    f(\mathcal{X}, \mathcal{Y}) = f(\pi_x\mathcal{X} , \pi_y\mathcal{Y})
\end{align}
for permutations $\pi_x$ on $\{1,\dots, N\}$ and $\pi_y$ on $\{1,\dots, M\}$.
\end{definition}
\begin{definition}[Permutation Equivariance]
A map $f: \mathfrak{X}^N \times \mathfrak{X}^M\to \mathfrak{X}^N$ is said to be permutation equivariant if
\begin{align}
    f(\pi_x\mathcal{X} , \pi_y\mathcal{Y}) = \pi_x f(\mathcal{X}, \mathcal{Y})
\end{align}
for permutations $\pi_x$ and $\pi_y$, where $\pi_x$ and $\pi_y$ are on $\{1,\dots,N\}$ and $\{1,\dots, M\}$, respectively.
Note that $f$ is permutation invariant for permutations within $\mathcal{Y}$.
\end{definition}
\begin{definition}[Symmetric Function]
A map $f: 2^\mathfrak{X}\times 2^{\mathfrak{X}} \to \mathbb{R}$ is said to be symmetric if
\begin{align}
    f(\mathcal{X}, \mathcal{Y}) = f(\mathcal{Y}, \mathcal{X}).
\end{align}
\end{definition}
\begin{definition}[Two-Set-Permutation Equivariance]
Given $\mathcal{X}^{(1)}\in \mathfrak{X}^N$ and $\mathcal{Z}^{(2)} \in \mathfrak{X}^M$, a map $f: \mathfrak{X}^\ast \times \mathfrak{X}^\ast \to \mathfrak{X}^\ast \times \mathfrak{X}^\ast$ is said to be two-set-permutation equivariant if
\begin{align}
    pf(\mathcal{Z}^{(1)}, \mathcal{Z}^{(2)}) = f(\mathcal{Z}^{(p(1))}, \mathcal{Z}^{(p(2))})
\end{align}
for any permutation operator $p$ exchanging the two sets, where $\mathfrak{X}^\ast = \cup^\infty_{n=0} \mathfrak{X}^n$ indicates a sequence of arbitrary length such as $\mathfrak{{X}^N}$ or $\mathfrak{X}^M$.
\end{definition}

We consider tasks where the matching function f is used per pair of sets~\cite{zhu2013point} to select a correct matching. 
Given candidate pairs of sets $(\mathcal{X}, \mathcal{Y}^{(k)})$, where $\mathcal{X}, \mathcal{Y}^{(k)} \in 2^{\mathfrak{X}}$ and $k\in \{1,\dots, K\}$, we choose $\mathcal{Y}^{(k^\ast)}$ as a correct one so that $f(\mathcal{X} , \mathcal{Y}^{(k^\ast)})$ achieves the maximum score from amongst the $K$ candidates.

\subsection{Set-to-set matching with negative sampling}
In real-world set-to-set matching problems, it is often the case that only positive example set pairs can be obtained.
Then, we consider training a model for set-to-set matching with negative sampling.
The learner is given positive examples $S^+ = \{(\mathcal{X}, \mathcal{Y})\}^{m^+}_{i=1}$.
Then, negative examples $S^- = \{(\mathcal{X}, \mathcal{Y})\}^{m^-}_{i=1}$ are generated by randomly combining set pairs from the given sets.
We assume that positive and negative examples are drawn according to the underlying distribution
$p^+$ and $p^-$, respectively.
Given training sample set $S = (S^+, S^-)$, the goal of set-to-set matching with negative sampling is to learn a real-valued score function $f:2^{\mathfrak{X}}\times 2^{\mathfrak{X}}\to\mathbb{R}$ that ranks future positive pair $(\mathcal{X}, \mathcal{Y})^+$ higher than negative pair $(\mathcal{X}, \mathcal{Y})^-$.
Let $\ell$ be the loss function, which is defined as
\begin{align}
    \ell(f, Z^+, Z^-) \coloneqq \varphi(f(Z^+) - f(Z^-)),
\end{align}
where $Z^+ = (\mathcal{X}, \mathcal{Y})^+$, $Z^- = (\mathcal{X}, \mathcal{Y})^-$ and $\varphi:\mathbb{R}\to\mathbb{R}^+$ is a convex function.
Typical choices of $\varphi$ include the logistic loss
\begin{align}
    \varphi(f(Z^+) - f(Z^-)) = \log\left\{1 + \exp(- (f(Z^+) - f(Z^-))\right\}.
\end{align}
\begin{definition}[Expected set-to-set matching loss]
Expected set-to-set matching loss $R(f)$ is defined as
\begin{align}
    R(f) &\coloneqq \mathbb{E}_{Z^+\sim p^+,Z^-\sim p^-}\left[\ell(f, Z^+, Z^-)\right].
\end{align}
\end{definition}
\begin{definition}[Empirical set-to-set matching loss]
Empirical set-to-set matching loss $\hat{R}(f;S)$ is defined as
\begin{align}
    \hat{R}(f;S) \coloneqq \frac{1}{m^+m^-}\sum^{m^+}_{i=1}\sum^{m^+ + m^-}_{j=m^+ + 1}\ell(f, Z^+, Z^-).
\end{align}
\end{definition}
Here, we assume that $\varphi$ has the Lipschitz property with respect to $\mathbb{R}$, i.e.,
\begin{align}
    \left|\varphi(a) - \varphi(b)\right| \leq L\cdot\left|a - b\right|,
\end{align}
where $a,b\in\mathbb{R}$ and $L>0$ is a Lipschitz constant.

\section{Margin bound for set-to-set matching}
Our first result is based on the Rademacher complexity.
\begin{definition}[Empirical Rademacher complexity]
Let $\mathcal{F}$ be a family of matching score functions.
Then, the empirical Rademacher complexity of $\mathcal{F}$ with respect to the sample $S$ is defined as
\begin{align}
    \hat{\mathcal{R}}_S(\mathcal{F}) \coloneqq \mathbb{E}_\sigma\left[\sup_{f\in\mathcal{F}}\frac{1}{m}\sum^m_{i=1}\sigma_i f(Z_i)\right].
\end{align}
\end{definition}
\begin{definition}[Rademacher complexity]
Let $p$ denote the distribution according to which samples are drawn.
For any integer $m\geq 1$, the Rademacher complexity of $\mathcal{F}$ is the expectation of the empirical Rademacher complexity over all samples of size $m$ drawn according to $p$:
\begin{align}
    \mathcal{R}_m(\mathcal{F}) \coloneqq \mathbb{E}_{S\sim p^m}\left[\hat{\mathcal{R}}(\mathcal{F})\right].
\end{align}
\end{definition}

Let $p_1$ the marginal distribution of the first element of the pairs, and by $p_2$ the marginal distribution with respect to the second element of the pairs.
Similarly, $S^1\sim p_1$ and $S^2\sim p_2$.
We denote by $\mathcal{R}^1_m$ the Rademacher complexity of $\mathcal{F}$ with respect to the marginal distribution $p_1$, that is $\mathcal{R}^1_m(\mathcal{F})=\mathbb{E}[\hat{\mathcal{R}}_{S^1}(\mathcal{F})]$, and similarly $\mathcal{R}^2_m(\mathcal{F})=\mathbb{E}[\hat{\mathcal{R}}_{S^2}(\mathcal{F})]$.

Here, we assume that the loss function is the following margin loss.
\begin{definition}
For any $\rho>0$, the $\rho$-margin loss is the function $\ell_\rho$ defined for all $z,z'\in\mathbb{R}$ by $\ell_\rho(z, z')=\phi(zz')$ with,
\begin{align}
    \phi_\rho(z) = \begin{cases}
    0 & (\rho \leq z) \\
    1 - z / \rho & (0 \leq z \leq \rho) \\
    1 & (z \leq 0).
    \end{cases}
\end{align}
\end{definition}

\begin{lemma}
\label{lem:margin_bound}
Let $Z\in\mathbb{R}$ be any input space, and $\mathcal{G}$ be a family of functions mapping from $Z$ to $[0,1]$.
Then, for any $\delta>0$, with probability at least $1-\delta$, each of the following holds for all $g\in\mathcal{G}$:
\begin{align}
    \mathbb{E}[g(z)] &\leq \frac{1}{m}\sum^m_{i=1}g(z_i) +2\mathcal{R}_m(\mathcal{G}) + \sqrt{\frac{\log\frac{1}{\delta}}{2m}}, \\
    \mathbb{E}[g(z)] &\leq \frac{1}{m}\sum^m_{i=1}g(z_i) + 2\hat{\mathcal{R}}_S(\mathcal{G}) + 3\sqrt{\frac{\log\frac{2}{\delta}}{2m}}.
\end{align}
\end{lemma}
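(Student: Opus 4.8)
The plan is to reduce Lemma~\ref{lem:margin_bound} to the classical one-sided Rademacher generalization bound, which is a textbook consequence of McDiarmid's inequality applied to the function $S \mapsto \sup_{g \in \mathcal{G}} \big(\mathbb{E}[g] - \frac{1}{m}\sum_i g(z_i)\big)$. First I would verify the bounded-differences condition: since every $g \in \mathcal{G}$ maps into $[0,1]$, changing a single sample point $z_i$ changes the supremum by at most $1/m$, so McDiarmid gives that with probability at least $1-\delta$, this supremum is bounded by its expectation plus $\sqrt{\log(1/\delta)/(2m)}$. This immediately yields, for all $g \in \mathcal{G}$ simultaneously,
\begin{align}
    \mathbb{E}[g(z)] \le \frac{1}{m}\sum^m_{i=1} g(z_i) + \mathbb{E}_S\Big[\sup_{g\in\mathcal{G}}\Big(\mathbb{E}[g] - \tfrac{1}{m}\textstyle\sum_i g(z_i)\Big)\Big] + \sqrt{\frac{\log\frac{1}{\delta}}{2m}}.
\end{align}

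Next I would bound the expected-supremum term by $2\mathcal{R}_m(\mathcal{G})$ via the standard symmetrization argument: introduce a ghost sample $S' = (z'_1,\dots,z'_m)$ drawn independently from the same distribution, write $\mathbb{E}[g] = \mathbb{E}_{S'}[\frac{1}{m}\sum_i g(z'_i)]$, pull the inner expectation out of the supremum using Jensen (convexity of $\sup$), then insert Rademacher signs $\sigma_i$, which is valid because each pair $(z_i, z'_i)$ is exchangeable so swapping them leaves the distribution unchanged. Splitting the supremum over the $\sigma_i$ and $-\sigma_i$ halves and using the symmetry of the Rademacher distribution produces the factor $2$, giving $\mathbb{E}_S[\sup_g(\mathbb{E}[g] - \frac{1}{m}\sum_i g(z_i))] \le 2\mathcal{R}_m(\mathcal{G})$. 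Combining with the McDiarmid step above proves the first inequality.

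For the second inequality, I would apply McDiarmid a second time, now to the map $S \mapsto \hat{\mathcal{R}}_S(\mathcal{G})$, which again has bounded differences $1/m$ since altering one $z_i$ perturbs $\frac{1}{m}\sum_i \sigma_i g(z_i)$ by at most $2/m$ inside the sup and hence $1/m$ on average; this shows $\mathcal{R}_m(\mathcal{G}) \le \hat{\mathcal{R}}_S(\mathcal{G}) + \sqrt{\log(2/\delta)/(2m)}$ with probability at least $1-\delta/2$. Applying the first inequality with confidence parameter $\delta/2$ and taking a union bound over the two events of probability $\delta/2$ each, the two $\sqrt{\log(2/\delta)/(2m)}$ terms add and the constant $2$ in front of the Rademacher term is preserved, yielding the claimed $3\sqrt{\log(2/\delta)/(2m)}$. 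None of the individual steps is genuinely hard; the only point requiring care is the bookkeeping of constants and confidence levels in the union bound for the empirical version, and making sure the symmetrization is stated for the one-sided (rather than absolute) deviation so that no spurious absolute values weaken the bound.
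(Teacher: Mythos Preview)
Your proposal is correct and follows essentially the same route as the paper: McDiarmid applied to $S\mapsto\sup_{g}\big(\mathbb{E}[g]-\tfrac{1}{m}\sum_i g(z_i)\big)$, symmetrization with a ghost sample to bound the expected supremum by $2\mathcal{R}_m(\mathcal{G})$, a second application of McDiarmid to pass from $\mathcal{R}_m(\mathcal{G})$ to $\hat{\mathcal{R}}_S(\mathcal{G})$, and a union bound for the empirical version. The only imprecision is your bounded-differences justification for $\hat{\mathcal{R}}_S(\mathcal{G})$: for fixed $\sigma$, changing one $z_i$ perturbs $\tfrac{1}{m}\sum_i\sigma_i g(z_i)$ by at most $1/m$ directly (since $|g(z_i)-g(z_i')|\le 1$), so the ``$2/m$ then $1/m$ on average'' detour is unnecessary, but your conclusion is right.
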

\begin{proof}
Let $\psi(S) = \sup_{g\in\mathcal{G}}\mathbb{E}[g] - \frac{1}{m}\sum^m_{i=1}g(z_i)$.
Then, for two samples $S$ and $S'$, we have
\begin{align*}
\psi(S') -\psi(S) \leq \sup_{g\in\mathcal{G}}\frac{g(z_m) - g(z'_m)}{m} \leq \frac{1}{m}.
\end{align*}
where $z_m\in S$ and $z'_m \in S'$.
Then, by McDiarmid's inequality, for any $\delta > 0$, with probability at least $1-\delta/2$, the following holds.
\begin{align*}
    \psi(S) \leq \mathbb{E}_S[\psi(S)] + \sqrt{\frac{\log\frac{2}{\delta}}{2m}}.
\end{align*}
We next bound the expectation of the right-hand side as follows.
\begin{align*}
    \mathbb{E}_S[\psi(S)] &= \mathbb{E}_S\left[\sup_{g\in\mathcal{G}}\mathbb{E}[g] - \frac{1}{m}\sum^m_{i=1}g(z_i)\right] \\
    &\leq \mathbb{E}_{S,S'}\left[\sup_{g\in\mathcal{G}}\frac{1}{m}\sum^m_{i=1}(g(z'_i) - g(z_i))\right] \\
    &= \mathbb{E}_{\sigma, S, S'}\left[\sup_{g\in\mathcal{G}}\frac{1}{m}\sum^m_{i=1}\sigma_i(g(z'_i) - g(z_i))\right] \\
    &\leq \mathbb{E}_{\sigma, S'}\left[\sup_{g\in\mathcal{G}}\frac{1}{m}\sum^m_{i=1}\sigma_i g(z'_i)\right] + \mathbb{E}_{\sigma,S}\left[\sup_{g\in\mathcal{G}}\frac{1}{m}\sum^m_{i=1}-\sigma_i g(z_i)\right] \\
    &= 2\mathbb{E}_{\sigma, S}\left[\sup_{g\in\mathcal{G}}\frac{1}{m}\sum^m_{i=1}\sigma_i g(z_i)\right] = 2\mathcal{R}_m(\mathcal{G}).
\end{align*}
Here, using again McDiarmid's inequality, with probability at least $1-\delta/2$, the following holds.
\begin{align*}
    \mathcal{R}_m(\mathcal{G}) \leq \hat{\mathcal{R}}_S(\mathcal{G}) + \sqrt{\frac{\log\frac{2}{\delta}}{2m}}.
\end{align*}
Finally, we use the union bound which yields with probability at least $1-\delta$:
\begin{align}
    \phi(S) \leq 2\hat{\mathcal{R}}_S(\mathcal{G}) + 3\sqrt{\frac{\log\frac{2}{\delta}}{2m}}.
\end{align}
\end{proof}

\begin{theorem}[Margin bound for set-to-set matching]
Let $\mathcal{F}$ be a set of matching score functions.
Fix $\rho>0$. Then, for any $\delta>0$, with probability at least $1-\delta$ over the choice of a sample $S$ of size $m$, each of the following holds for all $f\in\mathcal{F}$:
\begin{align}
    R(f) &\leq \hat{R}_\rho(f) + \frac{2}{\rho}\left(\mathcal{R}^1_m(\mathcal{F}) + \mathcal{R}^2_m(\mathcal{F})\right) + \sqrt{\frac{\log\frac{1}{\delta}}{2m}}, \\
    R(f) &\leq \hat{R}_\rho(f) + \frac{2}{\rho}\left(\hat{\mathcal{R}}_{S^1}(\mathcal{F}) + \hat{\mathcal{R}}_{S^2}(\mathcal{F})\right) + 3\sqrt{\frac{\log\frac{2}{\delta}}{2m}}.
\end{align}
\end{theorem}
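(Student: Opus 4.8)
The plan is to run the classical margin-based symmetrization argument, adapted to the bipartite positive/negative sampling structure, and then to peel off the margin loss by a Lipschitz-contraction step. First I would reduce the claim to a statement about the margin surrogate: by the definition of $R$ together with the pointwise bound $\mathbf{1}[z\le 0]\le\phi_\rho(z)$ applied to the score margin $z=f(Z^+)-f(Z^-)$, we have $R(f)\le\mathbb{E}_{Z^+\sim p^+,Z^-\sim p^-}[\phi_\rho(f(Z^+)-f(Z^-))]$. I then introduce the induced function class
\begin{align*}
\mathcal{G}_\rho=\bigl\{(Z^+,Z^-)\mapsto\phi_\rho\bigl(f(Z^+)-f(Z^-)\bigr):f\in\mathcal{F}\bigr\},
\end{align*}
whose members take values in $[0,1]$, so Lemma~\ref{lem:margin_bound} applies and gives, with probability at least $1-\delta$, $R(f)\le\hat{R}_\rho(f)+2\mathcal{R}_m(\mathcal{G}_\rho)+\sqrt{\log(1/\delta)/(2m)}$, and the corresponding empirical-complexity version with $2\hat{\mathcal{R}}_S(\mathcal{G}_\rho)$ and $3\sqrt{\log(2/\delta)/(2m)}$. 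It then remains only to bound $\mathcal{R}_m(\mathcal{G}_\rho)$ (respectively $\hat{\mathcal{R}}_S(\mathcal{G}_\rho)$) by the two marginal complexities.

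For that I would use two standard facts. Since $\phi_\rho$ is $1/\rho$-Lipschitz, Talagrand's contraction lemma gives $\hat{\mathcal{R}}_S(\mathcal{G}_\rho)\le\tfrac1\rho\,\hat{\mathcal{R}}_S(\mathcal{H})$ with $\mathcal{H}=\{(Z^+,Z^-)\mapsto f(Z^+)-f(Z^-):f\in\mathcal{F}\}$ (a constant shift of $\phi_\rho$ is harmless because the Rademacher variables average it out). Then subadditivity of the supremum splits the difference: $\mathbb{E}_\sigma\sup_f\frac1m\sum_i\sigma_i(f(Z_i^+)-f(Z_i^-))\le\hat{\mathcal{R}}_{S^1}(\mathcal{F})+\hat{\mathcal{R}}_{S^2}(\mathcal{F})$, using that $-\sigma_i$ is distributed as $\sigma_i$. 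Taking expectations yields $\mathcal{R}_m(\mathcal{H})\le\mathcal{R}^1_m(\mathcal{F})+\mathcal{R}^2_m(\mathcal{F})$, and combining with the contraction step and the factor $2$ coming from the lemma produces exactly $\tfrac2\rho(\mathcal{R}^1_m(\mathcal{F})+\mathcal{R}^2_m(\mathcal{F}))$; the second inequality follows identically from the empirical branch of Lemma~\ref{lem:margin_bound}.

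The step I expect to be the real obstacle is that $\hat{R}_\rho(f)$ is not an average over i.i.d.\ draws but a double average over the $m^+m^-$ cross pairs $(Z_i^+,Z_j^-)$, i.e.\ a two-sample $U$-statistic, so the symmetrization packaged inside Lemma~\ref{lem:margin_bound} does not apply off the shelf. I would handle this by the decoupling argument for $U$-processes: introduce an independent ghost sample, bound the uniform deviation by the symmetrized double average of differences between the two samples, and then symmetrize the positive slots and the negative slots with two independent families of Rademacher signs; after this the two families act on disjoint coordinates and the complexity genuinely factors through the two marginals, at which point the contraction and subadditivity steps above go through. One also has to track that, in the McDiarmid step, perturbing one positive example changes $\hat{R}_\rho$ by at most $1/m^+$ (it appears in $m^-$ of the $m^+m^-$ summands) and one negative example by at most $1/m^-$; absorbing these into a common effective sample size $m$ yields the residual terms in the stated form.
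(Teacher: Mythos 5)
Your proof follows essentially the same route as the paper's: reduce to the margin surrogate via $1_{u\le 0}\le\phi_\rho(u)$, apply Lemma~\ref{lem:margin_bound} to the $[0,1]$-valued composed class, use the $(1/\rho)$-Lipschitz contraction of $\phi_\rho$, and split the Rademacher complexity of the difference class into the two marginal complexities by subadditivity of the supremum. The $U$-statistic issue you flag in your last paragraph is real, but the paper does not address it either --- it applies Lemma~\ref{lem:margin_bound} directly as if the empirical margin loss were an average over $m$ i.i.d.\ pairs --- so your decoupling sketch is, if anything, more careful than the published argument.
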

\begin{proof}
Let $\tilde{\mathcal{F}}$ be the family of functions mapping $(\mathfrak{X}\times\mathfrak{X})\times\{-1, +1\}$ to $\mathbb{R}$ defined by $\tilde{\mathcal{F}}=\{z = (Z', Z), a)\mapsto a[f(Z') - f(Z)]\ |\ f\in\mathcal{F}\}$, where $a \in \{0,1\}$.
Consider the family of functions $\tilde{\mathcal{F}}=\{\phi_\rho \circ g\ |\ f\in\tilde{\mathcal{F}}\}$ derived from $\tilde{\mathcal{F}}$ which are taking values in $[0,1]$. By Lemma~\ref{lem:margin_bound}, for any $\delta>0$ with probability at least $1-\delta$, for all $f\in\mathcal{F}$,
\begin{align}
    \mathbb{E}\left[\phi_\rho(a[f(Z^+) - f(Z^-)])\right] \leq \hat{R}_\rho(f) + 2\mathcal{R}_m(\phi_\rho \circ \tilde{\mathcal{F}}) + \sqrt{\frac{\log\frac{1}{\delta}}{2m}}.
\end{align}
Since $1_{u\leq 0}\leq \phi_\rho(u)$ for all $u\in\mathbb{R}$, the generalization error $R(f)$ is a lower bound on left-hand side, $R(f)=\mathbb{E}[1_{a[f(Z') - f(Z)]\leq 0}]\leq \mathbb{E}[\phi_\rho(a[f(Z') - f(Z)])]$, and we can write
\begin{align}
    R(f) \leq \hat{R}_\rho(f) + 2\mathcal{R}_m(\phi_\rho \circ \tilde{\mathcal{F}}) + \sqrt{\frac{\log\frac{1}{\delta}}{2m}}.
\end{align}
Here, we can show that $\mathcal{R}_m(\phi_\rho \circ \tilde{\mathcal{F}})\leq \frac{1}{\rho}\mathcal{R}_m(\tilde{\mathcal{F}})$ using the $(1/\rho)$-Lipschitzness of $\phi_\rho$.
Then, $\mathcal{R}_m(\tilde{\mathcal{F}})$ can be upper bounded as follows:
\begin{align*}
    \mathcal{R}_m(\tilde{\mathcal{F}}) &= \frac{1}{m}\mathbb{E}_{S,\sigma}\left[\sup_{f\in\mathcal{F}}\sum^m_{i=1}\sigma_i a_i(f(Z'_i) - f(Z_i))\right] \\
    &= \frac{1}{m}\mathbb{E}_{S,\sigma}\left[\sup_{f\in\mathcal{F}}\sum^m_{i=1}\sigma_i(f(Z'_i) - f(Z_i))\right] \\
    &\leq \frac{1}{m}\mathbb{E}_{S,\sigma}\left[\sup_{f\in\mathcal{F}}\sum^m_{i=1}\sigma_i f(Z'_i) + \sup_{f\in\mathcal{F}}\sum^m_{i=1}\sigma_i f(Z_i)\right] \\
    &= \mathbb{E}_S\left[\mathcal{R}_{S^2}(\mathcal{F}) + \mathcal{R}_{S^1}(\mathcal{F})\right] = \mathcal{R}^{p_2}_m(\mathcal{F}) + \mathcal{R}^{p_1}_m(\mathcal{F}).
\end{align*}
\end{proof}

\section{RKHS bound for set-to-set matching}
In this section, we consider more precise bounds that depend on the size of the negative sample produced by negative sampling.
Let $S = ((\mathcal{X}_1,\mathcal{Y}_1),\dots,(\mathcal{X}_m,\mathcal{Y}_m))\in(\mathfrak{X}\times\mathfrak{X})^m$ be a finite sample sequence, and $m^+$ be the positive sample size.
If the positive proportion $\frac{m^+}{m} = \alpha$, then sample sequence $S$ also can be denoted by $S_\alpha$.

\begin{figure}[t]
    \centering
    \includegraphics[width=0.95\linewidth]{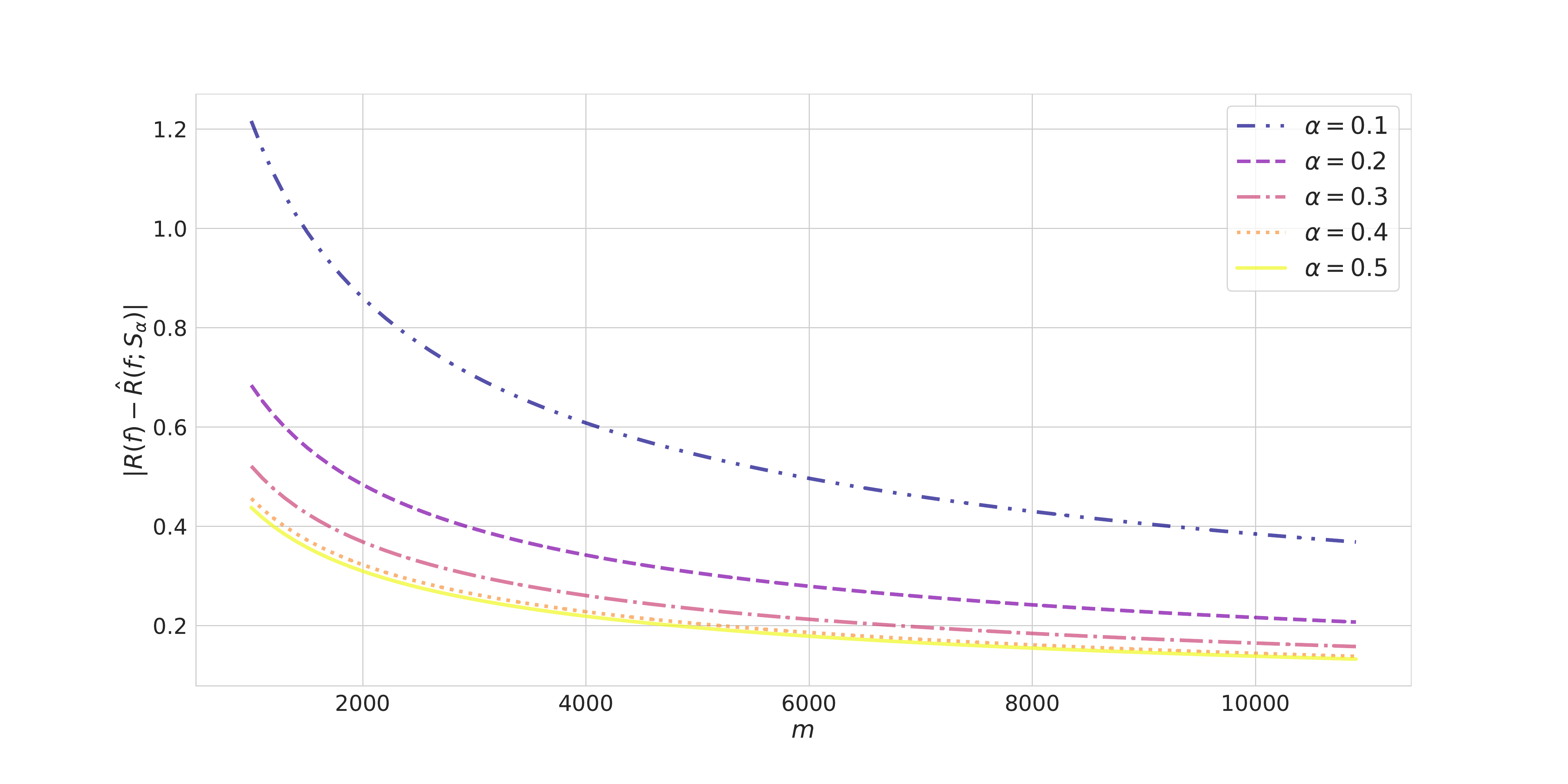}
    \caption{RKHS bound w.r.t. sample size $m$ and positive ratio $\alpha$.}
    \label{fig:rkhs_bound}
\end{figure}

Let $\mathfrak{R}_K$ be the reproducing kernel Hilbert space (RKHS) associated with the kernel $K$, and $\mathcal{F}_r$ is defined as
\begin{align}
    \mathcal{F}_r = \{f\in\mathfrak{R}_K\ |\ \|f\|_K\leq r\}
\end{align}
for $r>0$.
\begin{theorem}[RKHS bound for set-to-set matching]
\label{thm:rkhs_bound}
Suppose $S_\alpha$ to be any sample sequence of size $m$.
Then, for any $\epsilon > 0$ and $f\in\mathcal{F}_r$,
\begin{align}
    \mathbb{P}_{S_\alpha}\left[|\hat{R}(f; S_\alpha) - R(f)| \geq \epsilon\right] \leq 2\exp\left\{\frac{\alpha^2(1-\alpha)^2m\epsilon^2}{2L^2\kappa^2r^2}\right\},
\end{align}
where $\kappa \coloneqq \sup_{x}\sqrt{K(x, x)}$.
\end{theorem}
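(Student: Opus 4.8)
The plan is to prove the statement by applying McDiarmid's bounded-differences inequality to the map $S_\alpha \mapsto \hat{R}(f;S_\alpha)$, viewed as a function of its $m$ coordinates (the $m^+ = \alpha m$ positive examples and $m^- = (1-\alpha)m$ negative examples). First I would record that, since the positive examples are drawn i.i.d.\ from $p^+$, the negative examples i.i.d.\ from $p^-$, and the two collections are independent, the empirical loss is unbiased:
\begin{align*}
\mathbb{E}_{S_\alpha}\!\left[\hat{R}(f;S_\alpha)\right] = \frac{1}{m^+ m^-}\sum_{i=1}^{m^+}\sum_{j=m^++1}^{m^++m^-}\mathbb{E}_{Z^+\sim p^+,\, Z^-\sim p^-}\!\left[\ell(f,Z^+,Z^-)\right] = R(f),
\end{align*}
so it suffices to control the deviation of $\hat{R}(f;S_\alpha)$ from its own mean.

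Next I would verify the bounded-differences condition. If a single coordinate of $S_\alpha$ is replaced by another point, obtaining $S_\alpha'$, then at most $m$ of the $m^+ m^-$ summands defining $\hat{R}$ are affected (precisely $m^-$ of them when a positive example is changed and $m^+$ when a negative one is changed, both at most $m$). For each affected summand, the $L$-Lipschitzness of $\varphi$ gives $\left|\ell(f,Z^+,Z^-) - \ell(f,Z'^+,Z^-)\right| \le L\left|f(Z^+) - f(Z'^+)\right|$ (and symmetrically for a change in the negative slot), while for $f\in\mathcal{F}_r$ the reproducing property together with Cauchy--Schwarz yields $|f(Z)| = |\langle f, K(Z,\cdot)\rangle_K| \le \|f\|_K\sqrt{K(Z,Z)} \le r\kappa$, hence $|f(Z^+) - f(Z'^+)| \le 2r\kappa$. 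Combining, each affected summand changes by at most $2Lr\kappa$, so
\begin{align*}
\left|\hat{R}(f;S_\alpha) - \hat{R}(f;S_\alpha')\right| \le \frac{m}{m^+ m^-}\cdot 2Lr\kappa = \frac{2Lr\kappa}{\alpha(1-\alpha)m} =: c,
\end{align*}
uniformly over which coordinate is changed.

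Finally I would invoke McDiarmid's inequality with the common constant $c_k = c$ for all $k=1,\dots,m$. Since $\sum_{k=1}^{m} c_k^2 = m c^2 = \dfrac{4L^2\kappa^2 r^2}{\alpha^2(1-\alpha)^2 m}$, this gives
\begin{align*}
\mathbb{P}_{S_\alpha}\!\left[\left|\hat{R}(f;S_\alpha) - R(f)\right| \ge \epsilon\right] \le 2\exp\!\left(-\frac{2\epsilon^2}{\sum_{k=1}^{m} c_k^2}\right) = 2\exp\!\left(-\frac{\alpha^2(1-\alpha)^2 m\,\epsilon^2}{2L^2\kappa^2 r^2}\right),
\end{align*}
which is the claimed concentration bound.

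I expect the bounded-differences step to be the only delicate point: one must handle the double-sum structure of $\hat{R}$ (each coordinate enters many summands), the two distinct roles played by positive and negative examples, and the requirement that the per-summand perturbation be bounded \emph{uniformly over} $\mathcal{F}_r$ — which is exactly where the radius constraint $\|f\|_K \le r$ and the quantity $\kappa$ come in. The remaining ingredients (unbiasedness, the simplification $m/(m^+ m^-) = 1/(\alpha(1-\alpha)m)$, and the plug-in to McDiarmid) are routine bookkeeping; one should simply note that the factor $\alpha^2(1-\alpha)^2$ in the exponent arises from the crude count ``at most $m$ affected summands,'' and that using the sharper counts $m^-$ and $m^+$ would instead give the larger exponent constant $\alpha(1-\alpha)$.
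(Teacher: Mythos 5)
Your proposal is correct and follows essentially the same route as the paper: McDiarmid's bounded-differences inequality applied to $S_\alpha \mapsto \hat{R}(f;S_\alpha)$, with the per-summand perturbation controlled by the $L$-Lipschitzness of $\varphi$ and the RKHS bound $\|f\|_\infty \le \|f\|_K\sup_Z\sqrt{K(Z,Z)} \le r\kappa$. You actually make explicit two steps the paper leaves implicit (the unbiasedness $\mathbb{E}[\hat{R}]=R$ and the final plug-in yielding the exponent), and your observation about the constants is on point: your crude uniform bound of ``at most $m$ affected summands'' reproduces the stated $\alpha^2(1-\alpha)^2$ exactly, whereas the sharper per-coordinate constants $2L\kappa r/m^+$ and $2L\kappa r/m^-$ that the paper itself computes would give the stronger exponent factor $\alpha(1-\alpha)$ (and note the published statement is missing the minus sign in the exponent, which you correctly restore).
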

\begin{proof}
Denote $S = (S^+, S^-) = \{Z_1,\dots,Z_m\}$ and
\begin{align}
    z_i \coloneqq z(Z_i) \coloneqq = \begin{cases}
    +1 & (Z_i \in S^+), \\
    -1 & (Z_i \in S^-).
    \end{cases}
\end{align}
First, for each $1\leq k \leq m^+$ such that $z_i = +1$, let $(Z_k, +1)$ be replaced by $(Z'_k, +1)\in (\mathfrak{X}\times\mathfrak{X})\times \{-1, +1\}$, and we denote by $S^k$ as this sample.
Then,
\begin{align*}
    |\hat{R}(f;S) - \hat{R}(f;S^k)| &\leq \frac{1}{m^+m^-}\sum^{m^+ + m^-}_{j=m^+ + 1}\left|\varphi(f(Z_k) - f(Z_j)) - \varphi(f(Z'_k) - f(Z_j))\right| \\
    &\leq \frac{1}{m^+ m^-}\sum^{m^+ + m^-}_{j=m^+ + 1}L\cdot\left|f(Z_k) - f(Z_j) - f(Z'_k) + f(Z_j)\right| \\
    &= \frac{1}{m^+ m^-}\cdot m^- \cdot L\cdot \left|f(Z_k) - f(Z'_k)\right| \leq \frac{2L}{m^+}\|f\|_\infty.
\end{align*}
Next, for each $m^+ + 1 \leq k \leq m$ such that $z_i = -1$, let $(Z_l, -1)$ be replaced by $(Z'_k, -1)\in (\mathfrak{X}\times\mathfrak{X})\times \{-1, +1\}$ and we denote by $\bar{S}^k$ as this sample.
Similarly, we have
\begin{align*}
    |\hat{R}(f;S) - \hat{R}(f;\bar{S}^k)| &\leq\frac{2L}{m^+}\|f\|_\infty.
\end{align*}
Finally, for each $1\leq k \leq m^+$ such that $z_i = +1$, let $(Z_k, +1)$ be replaced by $(Z'_k, -1)\in (\mathfrak{X}\times\mathfrak{X})\times \{-1, +1\}$, and we denote by $\tilde{S}^k = \bar{S}^k\cup\{(Z_{m+1}, -1)\}$ as this sample.
Then, we have
\begin{align*}
    |\hat{R}(f;S) - \hat{R}(f;\tilde{S}^k)| &\leq \Gamma_1 + \Gamma_2,
\end{align*}
where $\Gamma_1 = |\hat{R}(f;S) - \hat{R}(f;S\cup\{Z_{m+1}, -1\})|$ and $\Gamma_2 = |\hat{R}(f;S\cup\{Z_{m+1}, -1\}) - \hat{R}(f;\tilde{S}^k)|$.
Since $\Gamma_1 \leq \frac{2L}{m^- + 1}\|f\|_\infty$ and $\Gamma_2 \leq \frac{2L}{m^+}\|f\|_\infty$, we have
\begin{align}
    |\hat{R}(f;S) - \hat{R}(f;\tilde{S}^k)| &\leq 2L\left(\frac{1}{m^+} + \frac{1}{m^- + 1}\right)\|f\|_\infty.
\end{align}
Combining them and applying McDiarmid’s inequality, we have the proof.
\end{proof}

\begin{remark}
Given $m,\epsilon,L$, we can find that the tight bound can be achieved when $\alpha = \frac{1}{2}$.
This means that it is desirable the number of positive samples be equal to the number of negative samples (See Figure~\ref{fig:rkhs_bound}).
\end{remark}
\begin{remark}
\label{remark:rkhs_bound}
For any $\delta>0$, with probability at least $1-\delta$, we have
\begin{align}
    \left|\hat{R}(f; S_\alpha) - R(f)\right| \leq \frac{L\kappa r}{\alpha(1-\alpha)}\sqrt{\frac{2\log\frac{2}{\delta}}{m}}.
\end{align}
\end{remark}
\begin{remark}
For Remark~\ref{remark:rkhs_bound}, Let $m = m^+ + m^-$ and fix $m^+\in\mathbb{N}$.
Then, we have the optimal negative sample size as $(1-\alpha) = 2/3$.
\end{remark}

\section{Conclusion and Discussion}
In this paper, we performed a generalization error analysis in set-to-set matching to reveal the behavior of the model in that task.
Our analysis reveals what the convergence rate of algorithms in set matching depend on the size of negative sample.
Future studies may include the following:
\begin{itemize}
    \item Derivation of tighter bounds. There are many types of mathematical tools for generalization error analysis of machine learning algorithms, and it is known that the tightness of the bounds depends on which one is used. For tighter bounds, it is useful to use mathematical tools not addressed in this paper~\cite{bartlett2005local,mcallester1999some,mcallester1999pac,millar1983minimax,duchi2013local}.
    \item Induction of novel set matching algorithms. It is expected to derive a novel algorithm based on the discussion of generalized error analysis.
    \item The effect of data augmentation for generalization error of set-to-set matching. Many data augmentation methods have been proposed to stabilize neural network learning, and theoretical analysis when these are used would be useful~\cite{kimura2021mixup,kimura2021understanding,van2001art,zhong2020random,shorten2019survey}.
\end{itemize}
%
%
%

\end{document}